\newif\ifdraft
\newcommand{\kac}[1]{{\color{orange}[\textbf{Kfir:} 
\textit{#1}]}}
\newcommand{\pcc}[1]{}
\newcommand{\kac}[1]{}
\newtheorem{theorem}{Theorem}[section]
\newtheorem{corollary}{Corollary}[theorem]
\newcommand{\cmark}{\color{LimeGreen}{\ding{51}}}%
\newcommand{\xmark}{\color{red}{\ding{55}}}%
\def\adl@drawiv#1#2#3{%
        \hskip.5\tabcolsep
        \xleaders#3{#2.5\@tempdimb #1{1}#2.5\@tempdimb}%
                #2\z@ plus1fil minus1fil\relax
        \hskip.5\tabcolsep}
\newcommand{\cdashlinelr}[1]{%
  \noalign{\vskip\aboverulesep
           \global\let\@dashdrawstore\adl@draw
           \global\let\adl@draw\adl@drawiv}
  \cdashline{#1}
  \noalign{\global\let\adl@draw\@dashdrawstore
           \vskip\belowrulesep}}
\def\RA{\rlap{\scalebox{1.3}{\;$\rightarrow$}}}
\def\uu{\mathbf{u}}
\def\vv{\mathbf{v}}
\def\AA{\mathbf{A}}
\def\BB{\mathbf{B}}
\def\nN{\mathcal{N}}
\def\Ee{\mathbb{E}}
\def\Re{\mathbb{R}}
\def\latex/{\LaTeX}
\def\bibtex/{\hologo{BibTeX}}
\newcommand{\RN}[1]{%
  \textup{\uppercase\expandafter{\romannumeral#1}}%
}
\definecolor{cvprblue}{rgb}{0.21,0.49,0.74}
\title{Orthogonal Adaptation for Modular Customization of Diffusion Models}
\author{Ryan Po\\
Stanford University
\and
Guandao Yang\\
Stanford University
\and
Kfir Aberman\\
Snap Research
\and
Gordon Wetzstein\\
Stanford University
}
\begin{document}
\twocolumn[{%
\renewcommand\twocolumn[1][]{#1}%
\vspace{-1em}
\maketitle
\vspace{-3.2em}
\begin{center}
    \centering
    \captionsetup{type=figure}
    \includegraphics[width=\linewidth]{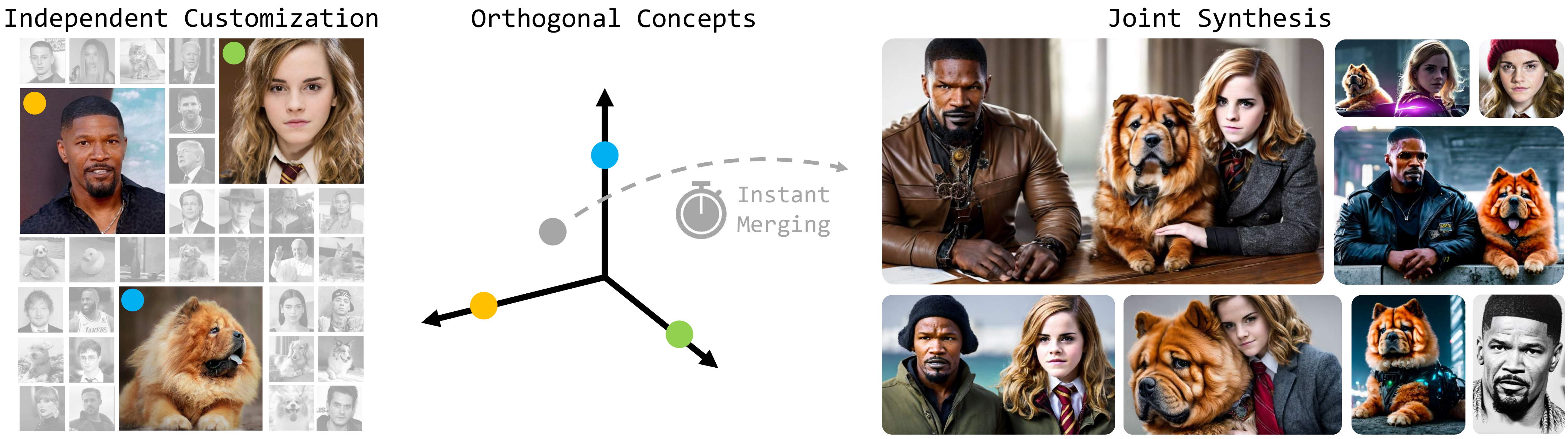}
    \vspace{-2em}
    \captionof{figure}{\textbf{Modular Customization of Diffusion Models.} Given a large set of individual concepts (left), the goal of \textit{Modular Customization} is to enable independent customization (fine-tuning) per concept, while efficiently merging a subset of customized models during inference, so that the corresponding concepts can be jointly synthesized without compromising fidelity. To tackle this, we propose \textit{Orthogonal Adaptation}, which encourages customized weights of one concept to be orthogonal to the customized weights of others.
    }
\end{center}%
\vspace{-0.2em}
}]
\renewcommand{\footnotesize}{\fontsize{7pt}{9pt}\selectfont}

\begin{abstract}
\vspace{-15pt}
Customization techniques for text-to-image models have paved the way for a wide range of previously unattainable applications, enabling the generation of specific concepts across diverse contexts and styles. While existing methods facilitate high-fidelity customization for individual concepts or a limited, pre-defined set of them, they fall short of achieving scalability, where a single model can seamlessly render countless concepts. In this paper, we address a new problem called Modular Customization, with the goal of efficiently merging customized models that were fine-tuned independently for individual concepts. This allows the merged model to jointly synthesize concepts in one image without compromising fidelity or incurring any additional computational costs.
To address this problem, we introduce Orthogonal Adaptation, a method designed to encourage the customized models, which do not have access to each other during fine-tuning, to have orthogonal residual weights. This ensures that during inference time, the customized models can be summed with minimal interference. 
\newcommand\blfootnote[1]{%
  \begingroup
  \renewcommand\thefootnote{}\footnote{#1}%
  \addtocounter{footnote}{-1}%
  \endgroup
}
\blfootnote{Project: {\color{magenta}\url{ryanpo.com/ortha}}; Demo: {\color{magenta}\url{hf.co/spaces/ujin-song/ortha}}}
Our proposed method is both simple and versatile, applicable to nearly all optimizable weights in the model architecture. Through an extensive set of quantitative and qualitative evaluations, our method consistently outperforms relevant baselines in terms of efficiency and identity preservation, demonstrating a significant leap toward scalable customization of diffusion models.

\end{abstract}    
\section{Introduction}
\label{sec:intro}

Diffusion models (DMs) mark a paradigm shift for computer vision and beyond. DM-based foundation models for text-to-image, video, or 3D generation enable users to create and edit content with unprecedented quality and diversity using intuitive text prompts~\cite{Po:2023:star_diffusion_models}. Although these foundation models are trained on a massive amount of data, in order to synthesize user-specific concepts (such as a pet, an item, or a person) with a high fidelity, they often need to be fine-tuned.

Several recent approaches to customizing DMs to individual concepts have demonstrated high-quality results~\cite{hu2022lora,gal2022image,ruiz2023dreambooth,kumari2022customdiffusion,voynov2023p+}. A multi-concept DM strategy, however, where several pre-trained concepts are mixed in a single image, remains challenging. Existing multi-concept methods~\cite{kumari2022customdiffusion,gu2023mix} either show a degradation in the quality of individual concepts when merged or require access to multiple concepts during training. The latter makes the process unscalable and raises privacy concerns when the different concepts belong to different users. Furthermore, in all cases the mixing process is computationally inefficient.
\begin{figure*}[t!]
    \centering
    \includegraphics[width=\linewidth]{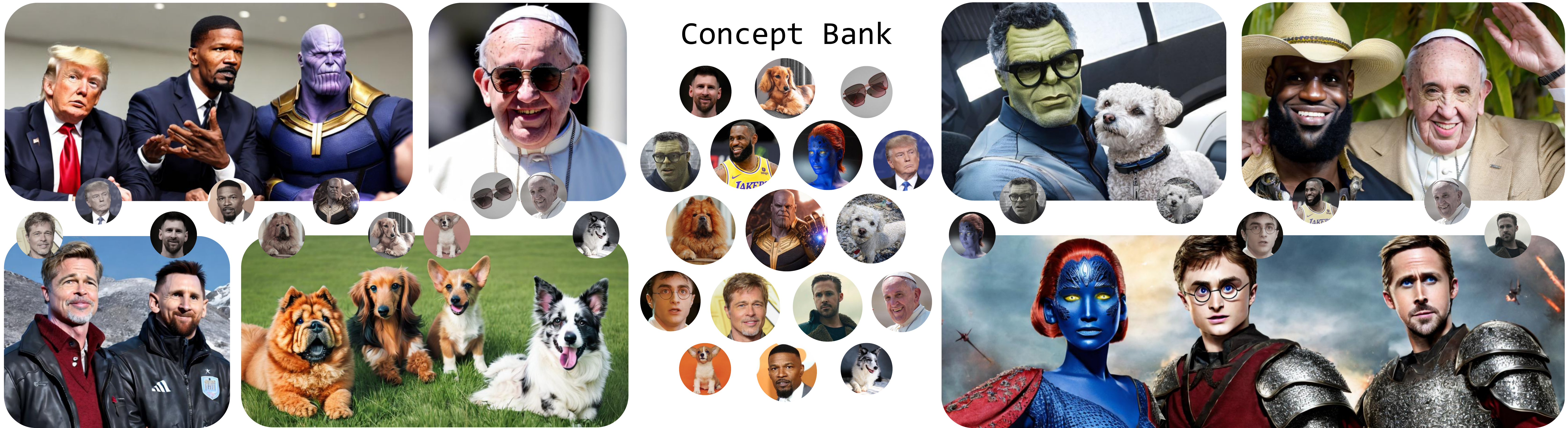}
    \vspace{-1.8em}
    \caption{\textbf{Gallery of multi-concept generations.} Our method enables efficient merging of individually fine-tuned concepts for modular, efficient multi-concept customization of text-to-image diffusion models. Each concept shown above was fine-tuned individually using orthogonal adaptation. Fine-tuned weight residuals are then merged via summation, enabling multi-concept generation.}
    \label{fig:gallery}
    \vspace{-1em}
\end{figure*}

We introduce orthogonal adaptation as a new approach to enabling instantaneous multi-concept customization of DMs. The primary insight of our work is that changing how the DM is fine-tuned for novel concepts can lead to very efficient mixing of these concepts. Specifically, we represent each new concept using a basis that is approximately orthogonal to the basis of other concepts. These bases do not need to be know a priori and different concepts can be trained independently of each other. A key advantage of our approach is that our model does not need to be re-trained when mixing several of our orthogonal concepts together, for example to jointly synthesize different concepts that were never seen together in any training example. Importantly, our approach is \emph{\bf{modular}} in that it enables individual concepts to be learned independently and in parallel without knowledge of each other. Moreover, it is privacy aware in the sense that it never requires access to the training images of concepts to mix them.

Consider a social media platform where millions of users fine-tune a DM using their personal concepts and want to mix them with their friends' concepts on their phones. Efficiency of the customization and mixing processes as well as data privacy are key challenges in this scenario. Our method addresses precisely these issues. A core technical contribution of our work is a modular customization and scalable multi-concept merging approach that offers better quality in terms of identity preservation than baselines at similar speeds, or similar quality to state-of-the-art baselines at significantly lower processing times.

\section{Related Work}
\label{sec:related}
\paragraph*{Text-conditioned image synthesis.}
The field of text-conditioned image synthesis has experienced significant advancements, driven by developments in GANs~\cite{goodfellow2020generative, brock2018large,karras2021alias,karras2019style,karras2020analyzing} and diffusion models~\cite{Ho2020DenoisingDP, Song2020DenoisingDI, Dhariwal2021DiffusionMB, Ho2022ClassifierFreeDG, Rombach2021HighResolutionIS, Pandey2022DiffuseVAEEC,nichol2021improved}.
Earlier efforts focus on applying GANs to various conditional synthesis tasks, including class-conditioned image generation~\cite{brock2018large,karras2019style,huang2022multimodal} and text-driven editing~\cite{abdal2022clip2stylegan,bau2021paint,gal2021stylegan,mokady2022self,xia2021tedigan,parmar2022spatially,roich2022pivotal}. 
More recently, the focus has shifted to large text-to-image models~\cite{Rombach2021HighResolutionIS,saharia2022photorealistic,yu2022scaling,ramesh2022hierarchical} trained on large-scale datasets~\cite{schuhmann2022laion}.
In this paper, we will utilize the open-source StableDiffusion~\cite{Rombach2021HighResolutionIS} architecture and build on its pre-trained checkpoints by fine-tuning. 

\addtolength{\tabcolsep}{-1pt}   
\begin{table}[t!]
\footnotesize
  \centering
  \begin{tabular}{lccc}
  \toprule
    Method & \shortstack{Fidelity \\\scriptsize{(Single-concept)}} & \shortstack{Efficient \\ Merging} & \shortstack{Fidelity \\ 
    \scriptsize{(Multi-concept)}} \\
    \midrule
    TI~\cite{gal2022image} & \xmark & \cmark & \xmark\\
    DB-LoRA\footnotemark~\cite{ruiz2023dreambooth} 
     & \cmark & \cmark & \xmark\\
    Custom Diffusion~\cite{kumari2022customdiffusion}
     & \xmark & \cmark & \xmark\\
    Mix-of-Show~\cite{gu2023mix}
     & \cmark & \xmark & \cmark\\
    \textbf{Ours} 
     & \cmark & \cmark & \cmark\\
    
    \bottomrule
  \end{tabular}
    \vspace{-0.8em}
    \caption{\textbf{Comparison of Solutions to Modular Customization.} Our customization approach excels in three key areas: (1) preserving the identity of individual concepts with high fidelity, (2) efficiently merging independently customized models, and (3) maintaining high concept fidelity for multi-concept image synthesis using the merged model. 
    }
  \label{tab:related_work}
  \vspace{-1.3em}
\end{table}
\footnotetext{assuming DB-LoRA fine-tuned models are merged with FedAvg~\cite{mcmahan2023communicationefficient}}
\addtolength{\tabcolsep}{1pt}   
\vspace{-1em}

\vspace{-1em}
\paragraph*{Customization.} 
The task of customization aims at capturing a user-defined concept, to be used for generation under various contexts. Seminal works such as Textual Inversion (TI)~\cite{gal2022image} and DreamBooth~\cite{ruiz2023dreambooth} tackle the problem of customization by taking a handful of images of the same concept to produce a representation of the subject to be used for controlled generation. TI captures new concepts by optimizing a text embedding to reconstruct target images using the conventional diffusion loss. Follow-up works, such as $\mathcal{P}+$~\cite{hertz2022prompt}, extend Texture Inversion with a more expressive token representation, improving generation subject alignment/fidelity. 
DreamBooth~\cite{ruiz2023dreambooth}, on the other hand, picks an uncommon word token and fine-tunes the network weights to reconstruct the target concept using diffusion loss~\cite{Ho2020DenoisingDP}. Custom Diffusion~\cite{kumari2022customdiffusion} works in a similar way but only fine-tunes a subset of the diffusion model layers, namely the cross-attention layers. LoRA~\cite{hu2022lora} is a low-rank matrix decomposition method that enables better parameter efficiency for fine-tuning methods, and was recently adapted to customization of text-to-image diffusion models~\cite{db_lora} (DB-LoRA). 
Recent works~\cite{sohn2023styledrop,ruiz2023hyperdreambooth,jia2023taming,Su2023IdentityEF,ye2023ip-adapter,Wang2023StyleAdapterAS,Shi2023InstantBoothPT} try to improve speed by training feed-forward networks to predict adaptation parameters from data, successfully amortize the time taken to create customize concepts.

\vspace{-1.2em}
\paragraph*{Multi-concept Customization.} 
Certain existing works have taken the task of customization one step further, aiming to inject multiple novel concepts into a model at the same time. Custom Diffusion~\cite{kumari2022customdiffusion} achieves this through a joint optimization loss for all concepts, while Break-a-scene~\cite{Avrahami2023BreakASceneEM} and SVDiff~\cite{han2023svdiff} introduces a masked cross-attention loss to learn individual concepts in images containing multiple concepts. However, such methods require access to ground truth data of all concepts training. In this work, we are interested in the task of \textbf{\textit{modular customization}}, where concepts are learned independently, and users can then mix and match individual concepts during inference for multi-concept image synthesis (Sec.~\ref{sec:mod-cus}). 

Prior works have provided implicit solutions to the problem of modular customization, but each existing method comes with its own set of trade-offs. TI~\cite{gal2022image, mokady2023null, voynov2023p+} implicitly addresses the task by representing each concept through a unique token embedding, enabling multi-concept customization by simply querying each token. However, TI tends to suffer from low subject fidelity, as token embeddings alone provide limited expressivity. Federated Averaging (FedAvg)~\cite{mcmahan2023communicationefficient} merges fine-tuned models by simply taking a weighted average between the weights of each model, although fast and expressive, naive combination tends to lead to loss of concept identity.
Custom Diffusion~\cite{kumari2022customdiffusion} supports merging of individually fine-tuned networks through solving a constrained customization problem. This method also struggles with expressivity, as only a small subset of the diffusion model weights are being updated. Concurrent work, Mix-of-Show (MoS)~\cite{gu2023mix} expands on this method by introducing gradient fusion, enabling merging of multiple separately fine-tuned models without placing restrictions on parameter expressivity. Though expressive, gradient fusion is computationally demanding, taking $\sim$15-20 minutes just to combine three custom concepts into a single model, which becomes intractably expensive when deployed at scale.
Table~\ref{tab:related_work} summarizes the key areas in which our approach differs from previous and concurrent works.

\section{Method}\label{sec:setup}
\begin{figure}[t]

\includegraphics[width=\linewidth]{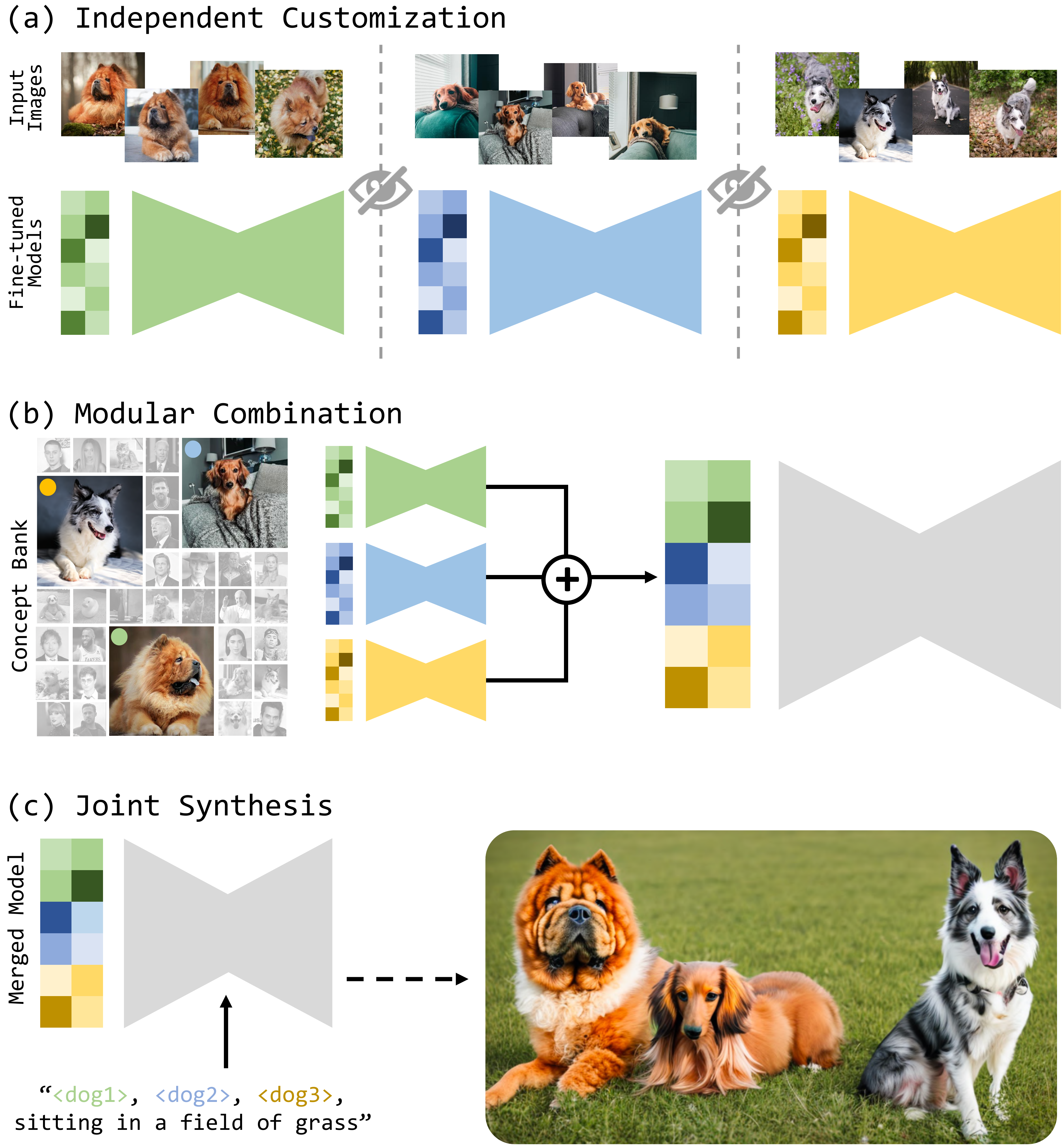}
\caption{The three stages of \textbf{Modular Customization}: (a) Independent Customization, (b) Modular Combination, and (c) Joint Synthesis. Note that during individual fine-tuning, all processes are private, meaning each user does not have access to ground truth data for other concepts. 
}
\label{fig:setup}
\vspace{-1.5em}
\end{figure}

In this section, we first introduce the problem setting of modular customization (Sec.~\ref{sec:mod-cus}). 
We then take a look at the simple solution of FedAvg~\cite{mcmahan2023communicationefficient}, and explore where and why this naive method fails to preserve identity (Sec.~\ref{sec:fedavg}). Motivated by the limitations of FedAvg, we discuss the conditions to ensure concept identity preservation (Sec.~\ref{sec:preserve}), and finally introduce our solution to modular customization -- \textbf{\textit{orthogonal adaption}} (Sec.~\ref{sec:orth-adapt} and Sec.~\ref{sec:basis}).

\subsection{Modular Customization}\label{sec:mod-cus}
In this paper, we are interested in customizing text-to-image diffusion models to generate multiple personal concepts in an efficient, scalable, and decentralized manner. 
\begin{figure*}[t!]
\includegraphics[width=\linewidth]{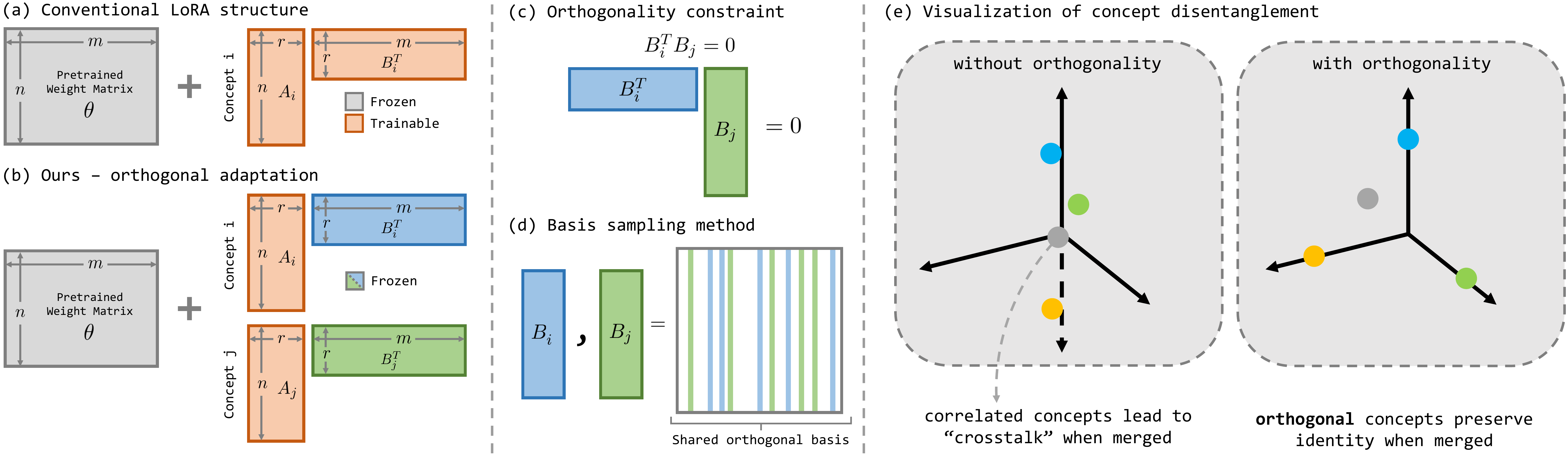}
\caption{\textbf{Overview of Orthogonal Adaptation.} (a) LoRA~\cite{hu2022lora} enables training of both low-rank decomposed matrices. (b) Orthogonal adaption constrains training only to $A$, leaving $B$ fixed. (c) For two separate concepts, $i$ and $j$, an orthogonality constraint is imposed between $B_i$ and $B_j$. (d) When concepts $i$ and $j$ are trained independently, approximate orthogonality between $B_i$ and $B_j$ can be achieved by sampling random columns from a shared orthogonal matrix. (e) Without the orthogonality constraint, correlated concepts suffer from ``crosstalk'' when merged; with the orthogonality constraint, orthogonal concepts preserve their identities after merging.}
\label{fig:method}
\vspace{-1.5em}
\end{figure*}
In addition to single-concept text-to-image customization, users are usually interested in seeing multiple concepts interacting together.
This calls for a text-to-image model that is customized to a set of concepts.
Being able to generate multiple personalized concepts in a single model, however, is challenging.
First, the number of sets containing all possible combinations of concepts is growing exponentially with respect to the number of  concepts -- an intractable number even for a relatively small number of concepts. 
As a result, it's important for personalized concepts to be merged with interactive speed.
Furthermore, users usually have limited compute at their end, which means any computation done on the users end should ideally be trivial.

These requirements motivate an efficient and scalable fine-tuning setting we call \textbf{\textit{modular customization}}, where individual fine-tuned models should act like independent modules, which can be combined with others in a plug-and-play manner without additional training. The setting of modular customization involves three stages: independent customization, modular combination and joint synthesis. Fig.~\ref{fig:setup} provides an illustration of this three stage process. 

With modular customization in mind, our goal is to design a fine-tuning scheme, such that individually fine-tuned models can be trivially combined (e.g. summation) with any other fine-tuned model to enable multi-concept generation.

\subsection{Federated Averaging}\label{sec:fedavg}
Perhaps the most straight-forward technique for achieving modular customization is to take a weighted average of each individually fine-tuned model. This technique is often referred to as FedAvg~\cite{mcmahan2023communicationefficient}. Given a set of learned weight residuals $\Delta\theta_i$ optimized on concept $i$, the resulting merged model is simply given by
\begin{equation}
    \theta_{\text{merged}} = \theta + \sum_i \lambda_i\Delta\theta_i,
    \label{eq:fedavg}
\end{equation}
where $\theta$ represents the pre-trained parameters of the model used for fine-tuning, and $\lambda_i$ is a scalar representing the relative strength of each concept. While FedAvg is fast and places no constraints on the expressivity of each individually fine-tuned model, naively averaging these weights can lead to loss of subject fidelity due to interference between the learned weight residuals. This effect is especially severe when training multiple semantically similar concepts (e.g., human identities), as learned weight residuals tend to be very similar. 
We coin this undesirable phenomenon ``crosstalk''. Fig.~\ref{fig:multi-concept} and Fig.~\ref{fig:ablations}(a) provide visualizations of the effect of crosstalk, as FedAvg causes multi-concept generations to exhibit loss of identity.
Our approach is inspired by FedAvg.
We adopt its computational efficiency but modify the fine-tuning process to ensure  minimal interference between learned weight residuals between different concepts.
We want to enable instant, multi-concept customization from individually trained models without sacrificing subject fidelity.

\subsection{Preserving Concept Identity}\label{sec:preserve}
With the goal of addressing the limitations of FedAvg, we first examine where this method fails. For simplicity, consider the case of merging two concepts $i$ and $j$. After fine-tuning on each individual task, we receive a set of learned weight residuals $\Delta\theta_i$ and $\Delta\theta_j$. The output of a particular linear layer in the fine-tuned network is
\begin{equation}
    O_i(X_i) = (\theta + \Delta\theta_i)X_i,
\end{equation}
where $X_i$ represents a particular input to the layer corresponding to the training data of concept $i$. When merging the two concepts using FedAvg with $\lambda = 1$, the resulting merged model produces
\begin{equation}
    \hat{O}_i(X_i) = (\theta + \Delta\theta_i + \Delta\theta_j)X_i.
\end{equation}
The goal of concept preservation is to have $\hat{O}_i(X_i) = O_i(X_i)$. Note that, without enforcing specific constraints, it is likely that $\Delta\theta_jX_i\neq0$ and, subsequently, $\hat{O}_i \neq O_i$.

It follows that the mapping of data for concept $i$ is preserved when $\Delta\theta_j X_i = 0$ for $j \neq i$. By symmetry, the mapping of data for concept $j$ is preserved given $\Delta\theta_i X_j = 0$ for $i \neq j$. Intuitively, $||\Delta\theta_j X_i||$ measures the amount of crosstalk between the customized weights of concepts $i$ and $j$. We would like to keep this value low to ensure subject identity is preserved even after merging. However, note that given enough data for training a certain concept $i$, $X_i$ is likely to have full column rank. This makes the orthogonality condition impossible to satisfy. Instead, we propose a relaxation to this condition, choosing to minimize the crosstalk term for some projection of $X_i$ onto a subspace $S_i$. This projection yields $S_iS^T_iX_i$, and our relaxed objective hopes to achieve $\hat{O}_i(S_iS^T_iX_i) = O_i(S_iS^T_iX_i)$.

\subsection{Orthogonal Adaptation}\label{sec:orth-adapt}
Motivated by the relaxed objective above, we propose \textit{\textbf{orthogonal adaptation}}. Similar to low-rank adaptation (LoRA), we represent learned weight residuals through a low-rank decomposition of the form
\begin{equation}
    \Delta \theta_i = A_iB_i^T, \theta_i\in\Re^{n\times m} A_i\in\Re^{n\times r}, B_i\in\Re^{m\times r},
\end{equation}
where the rank $r <\!\!< \min(n,m)$. However, contrary to conventionally fine-tuning with LoRA, we keep $B_i$ constant, and only optimize $A_i$. 

Consider a matrix $\bar{B}_j$, where its columns span the orthogonal complement of the column space of $B_j$.
We show that by selecting $S_i = \bar{B}_j$, we achieve the conditions for achieving the projected preservation objective. This can be seen from the fact that,
\begin{align}
    \hat{O}_i(S_iS^T_iX_i) &= O_i(S_iS^T_iX_i) + \Delta\theta_jS_iS^T_iX_i \\
    &= O_i(S_iS^T_iX_i) + A_j\cancelto{0}{B^T_jS_i}S^T_iX_i  \\
    &= O_i(S_iS^T_iX_i).
\end{align}
Since $r <\!\!< m$, the orthogonal complement of $B_j$ covers most of $\Re^m$. It follows that $\bar{B}_j\bar{B}^T_jX_i \approx X_i$, making $\bar{B}_j$ a reasonable candidate for $S_i$.

At the same time, since we expect the learned residuals for a concept to have meaningful interactions with their data, we would also like to ensure $||\Delta\theta_iX_i||$ is non-trivial. By approximating $X_i$ with its projection onto $\bar{B}_j$, our objective changes to ensuring $||A_iB^T_i\bar{B}_j\bar{B}^T_jX_i||$ is non-trivial. Examining this term gives us the additional constraint that $B^T_i\bar{B}_j \neq 0$, meaning the columns of $B_i$ should live in the orthogonal complement of the columns space of $B_j$. Therefore, to ensure meaningful fine-tuning results, we should also enforce orthogonality between the learned residuals, i.e. $B_i^TB_j = 0$.

Fig.~\ref{fig:method} provides an overview of our orthogonal adaption method. Intuitively, as illustrated in Fig.~\ref{fig:method}(e), our method disentangles custom concepts into orthogonal directions, ensuring that there is no crosstalk between concepts. 
As a result, our merged model can better preserve the identity of each concept.
\begin{figure}[t!]
\centering
\includegraphics[width=\linewidth]{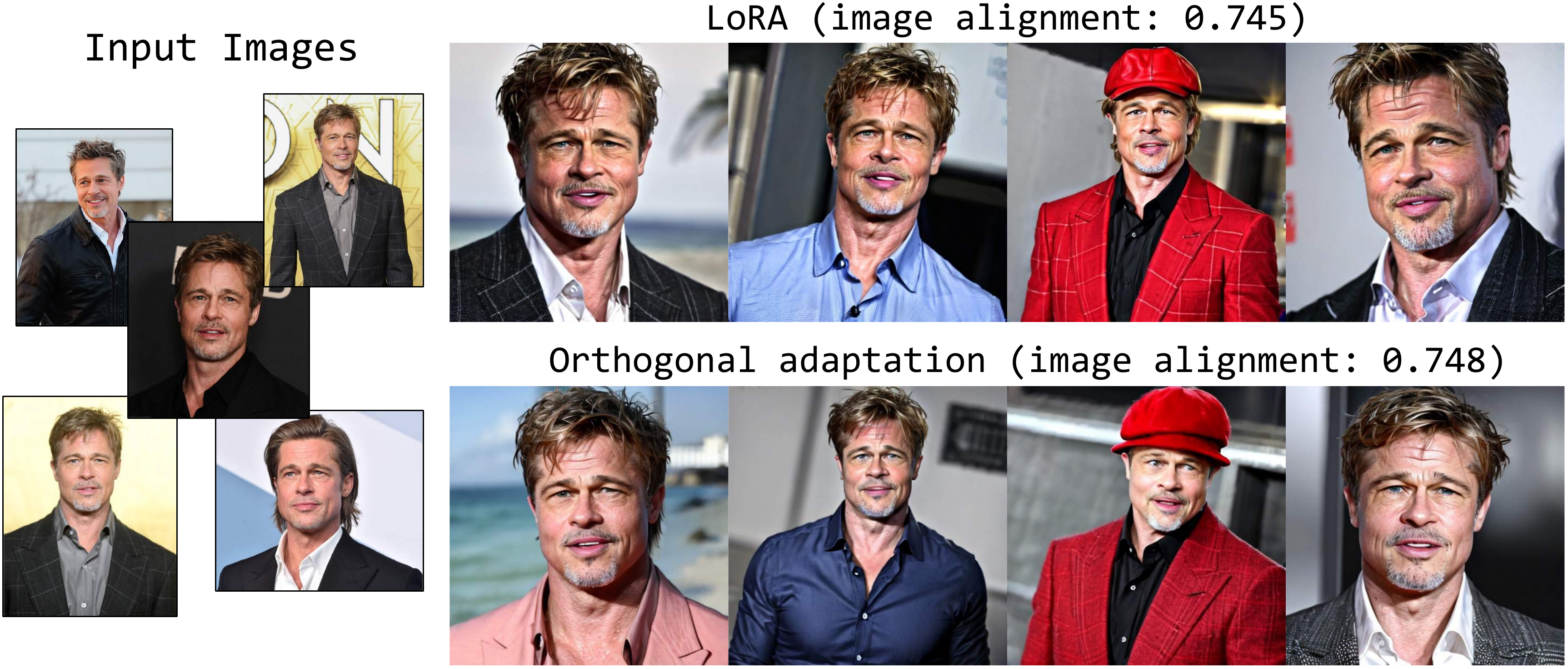}
\vspace{-2em}
\caption{\textbf{Over-parameterization of text-to-image models.} 
Despite the added constraint on the trained weight residuals, due to the over-paramterized nature of large text-to-image diffusion models, our method is able to achieve single-concept customization results with comparable fidelity to the unconstrained setting.
}
\label{fig:constrained-optimization}
\vspace{-1em}
\end{figure}
\vspace{-1em}
\paragraph{Expressivity of orthogonal adaption.} Expressivity of our method arises as a natural concern as we are optimizing significantly fewer parameters by freezing $B_i$. Fortunately, text-to-image diffusion models are often over-parameterized, with millions/billion of parameters. Prior works have shown that even fine-tuning a subspace of such parameters can be expressive enough to capture a novel concept. We also show this result empirically in Fig.~\ref{fig:constrained-optimization}, where our method leads to results with similar fidelity, even without the need to optimize $B_i$ during training.

\subsection{Designing Orthogonal Matrices $B_i$'s}\label{sec:basis}
A key challenge of the method described in previous sections is to generate a set of basis matrices $B_i$ that are orthogonal to each other.
Note that this is very difficult especially because when choosing $B_i$, the user is not aware of what basis the other users chose to optimize for the concepts to be combined in the future.
Strictly enforcing such orthogonality might be infeasible without prior knowledge of other tasks.
We instead propose a relaxation to the constraint, introducing a simple and effective method to achieve approximate orthogonality.
\vspace{-1em}
\paragraph{Randomized orthogonal basis.} One method for enforcing approximate orthogonality is to determine a shared orthogonal basis.
For some linear weight $\theta \in \Re^{m\times n}$, we first generate a large orthogonal basis $O\in \Re^{n\times n}$. This orthogonal basis is shared between all users. During training of concept $i$, $B_i$ is formed from taking a random subset of $k$ columns from $O$. Given $k <\!\!< n$, the probability of two randomly chosen $B_i$'s to share the same columns is kept low.
\vspace{-2em}
\paragraph{Randomized Gaussian.} Another approach is to choose random matrix elements.
Specifically, we sample each entry of $B_i$ from a zero-mean Gaussian with standard deviation $\sigma$: $B_i[k] \sim \nN(0, \sigma^2 I)$.
When the dimensionality of $B_i$ is high, this simple strategy creates matrices that are orthogonal in expectation: $\Ee\left[B_i^TB_j \right] = 0$ (see supplement for discussion). Naturally, this method does not require knowledge of a shared basis to sample from. In practice, however, we found randomized Gaussians lead to higher levels of crosstalk in our setting, i.e., $||B_i^TB_j||$ tends to be larger than for the randomized orthogonal basis.


\begin{figure*}[t!]
    \centering
\includegraphics[width=\linewidth]{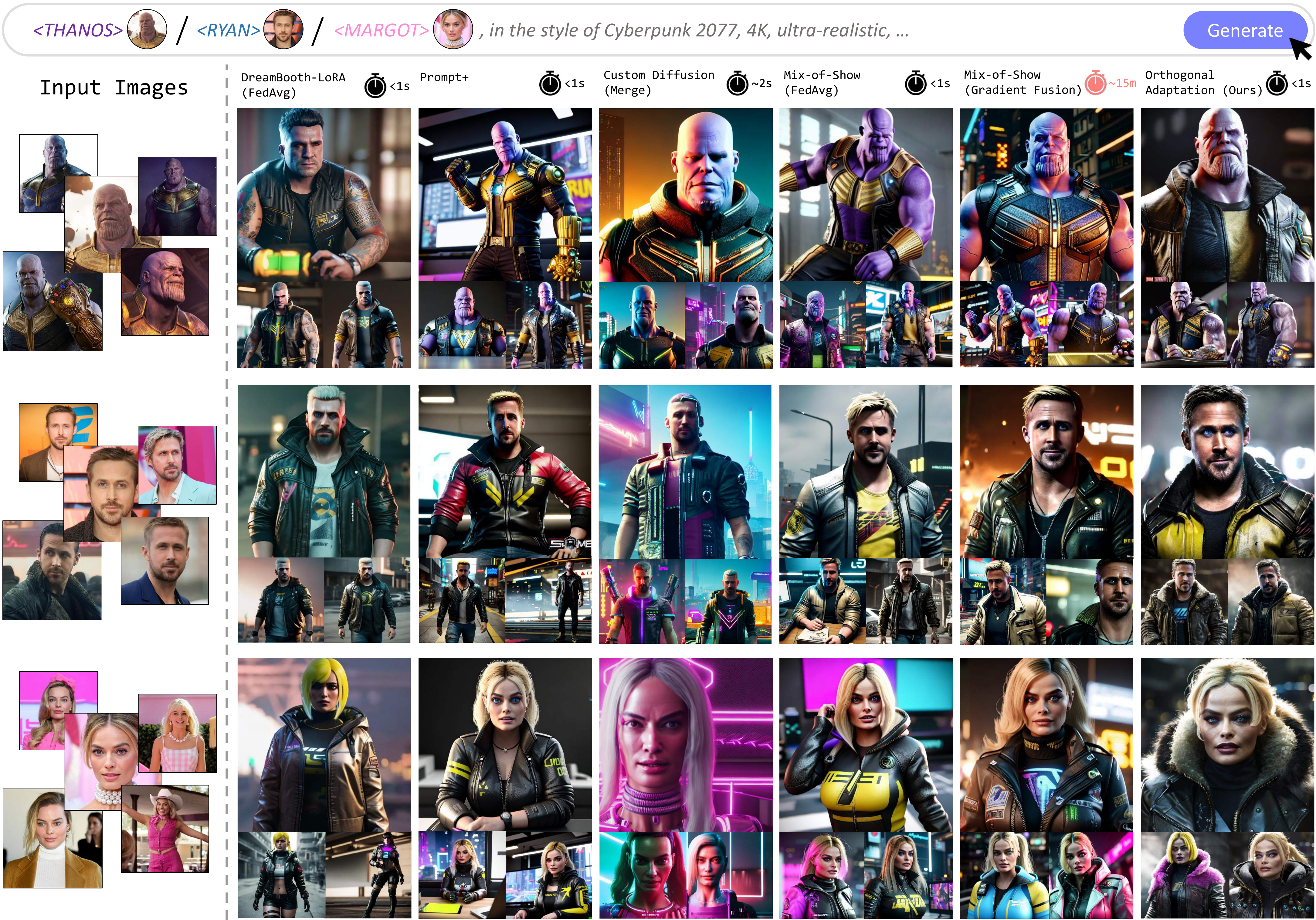}
    \vspace{-1.8em}
    \caption{\textbf{Identity preservation in single-concept generations from a merged model.} We demonstrate our method's ability to maintain identity consistency across different single-concept generations. Each column showcases images from the same merged model, representing three distinct concept identities. Our approach showcases better identity alignment with the corresponding input images, offering a significant improvement over comparable merging methods. Additionally, our method's performance parallels that of Mix-of-Show (Gradient Fusion) but with the advantage of near-instantaneous merging, in contrast to the approximately 15-minute merging time required.
    }
    \vspace{-1.6em}
    \label{fig:identity}
\end{figure*}
\section{Experiments}In this section, we show the results of our method applied to the task of modular customization. Qualitative and quantitative results indicate that our method outperforms relevant baselines~\cite{gu2023mix,db_lora,kumari2022customdiffusion} at similar speeds, and quality on par with state-of-the-art baselines that require significantly higher processing times~\cite{gu2023mix}.
\vspace{-1em}
\paragraph{Datasets.}We perform evaluations on a custom dataset of 12 concept identities, each containing 16 unique images of the target concept in different contexts. 
\vspace{-1em}
\paragraph{Implementation details.}We perform fine-tuning on the Stable Diffusion~\cite{Rombach2021HighResolutionIS} model, specifically the ChilloutMix checkpoint for its ability to handle high-fidelity human face generation. For single-concept fine-tuning, we apply orthogonal adaptation to all linear layers in the Stable Diffusion architecture. Following prior work~\cite{voynov2023p+,gu2023mix}, we also apply a layer-wise text embedding and represent each fine-tuned concept as two separate text tokens. We fine-tune the text embeddings with a learning rate of $1e-3$, the diffusion model parameters with a learning rate of $1e-5$ and set $r = 20$ for all experiments. Single-concept fine-tuning takes $\sim$10-15 minutes on two A6000 GPUs. For our method, we enforce the orthogonality constraint using the randomized orthogonal basis method for all experiments. Methods using FedAvg (including orthogonal adaption) were merged using $\lambda = 0.6$.
\begin{figure*}[t!]

    \centering
\includegraphics[width=\linewidth]{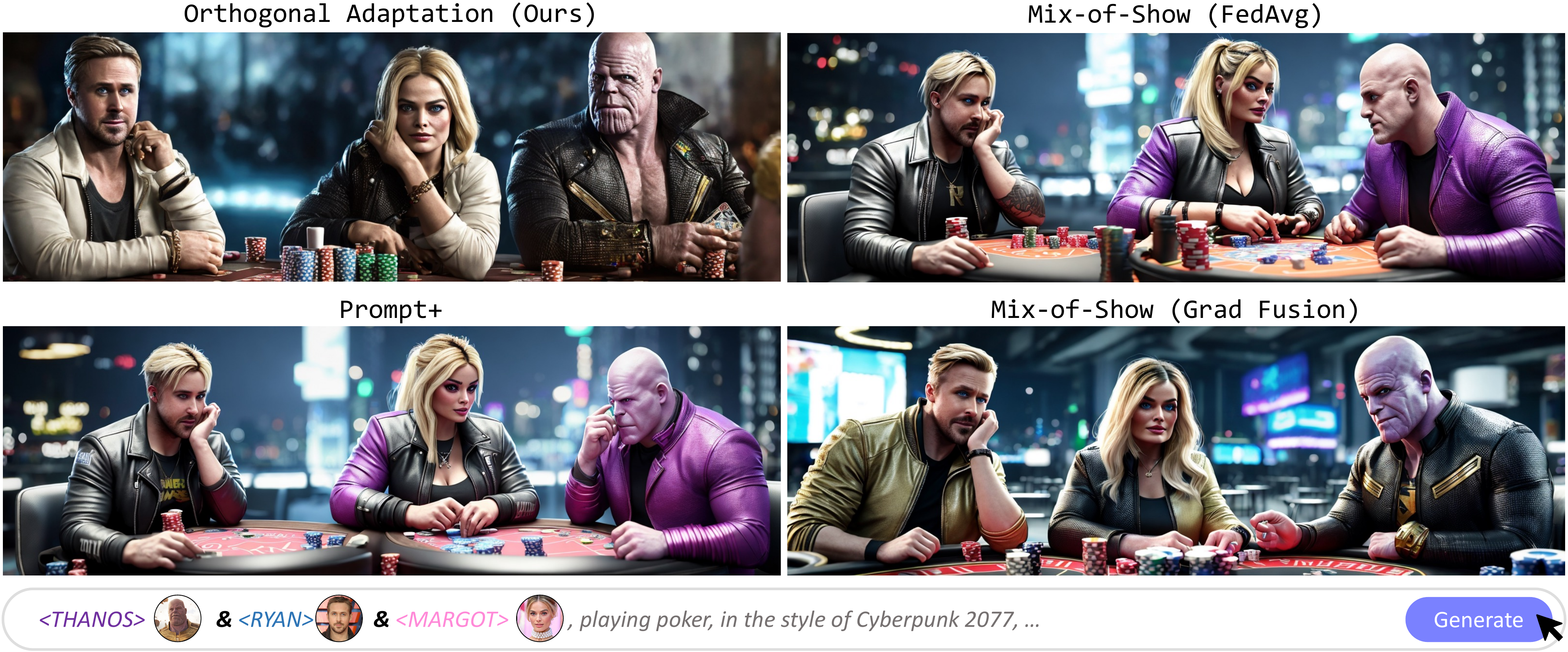}
\vspace{-1.8em}
    \caption{\textbf{Multi-concept results.} Examples of multi-concept generations, synthesized using sampling techniques from concurrent work~\cite{gu2023mix}. While Mix-of-Show (FedAvg) maintains high-level features, it struggles with crosstalk, manifesting overly smooth facial features. Mix-of-Show (Gradient Fusion) exhibits good identity alignment, albeit with a computationally intensive merging process. $\mathcal{P}+$ manages to preserve identity after merging, but struggles to capture identity with high-fidelity due to limited parameter expressivity. Our method stands out by achieving high identity alignment with a significantly faster merging procedure. }
    \label{fig:multi-concept}
\end{figure*}

\begin{table*}[t!]
    \vspace{-0.5em}
  \centering
  \begin{tabular}{llccccccccc}
  \toprule
    \multirow{2}{*}[-0.1cm]{Method}& \multirow{2}{*}[-0.1cm]{\shortstack{Merge \\ Time}} & \multicolumn{3}{c}{Text Alignment $\uparrow$} & \multicolumn{3}{c}{Image Alignment $\uparrow$} & \multicolumn{3}{c}{Identity Alignment $\uparrow$} \\
    \cmidrule{3-11}
    & & Single  & Merged  & $\Delta$ & Single  & Merged  & $\Delta$ & Single  & Merged  & $\Delta$\\
    \midrule
    P+~\cite{voynov2023p+} & $<$1 s & .643\RA & .643 & --- & .683\RA & .683 & --- & .515\RA & .515 & --- \\
    Custom Diffusion~\cite{kumari2022customdiffusion} & $\sim$2 s & .668\RA & .673 & +.005 & .648\RA & .623 & -.025 & .504\RA & .408 & -.096 \\
    DB-LoRA (FedAvg)~\cite{db_lora} & $<$1 s & .613\RA & .682& +.069 & .744\RA & .531 & -.213 & .683\RA  & .098 & -.585 \\
    MoS (FedAvg)~\cite{gu2023mix}& $<$1 s & .625\RA & .621 & -.004 & .745\RA & .735 & -.010 & .728\RA & .706 & -.022 \\
    MoS (Grad Fusion)~\cite{gu2023mix}& \color{red}{$\sim$15 m} & .625\RA & .631 & +.006 & .745\RA& .729 & -.016 & .728\RA & .717 & -.011 \\
    \cdashlinelr{1-11}
    Ours & $<$1 s & .624\RA & .644 & -.010 & .748\RA & \textbf{.741} & -.007 & .740\RA & \textbf{.745} & +.005\\
    \bottomrule
  \end{tabular}
  \vspace{-0.8em}
    \caption{\textbf{Quantitative results.} We provide detailed qualitative comparisons for each method, evaluated both before and after the merging process. Prior to merging, our method demonstrates comparable performance in all identity-related metrics, highlighting its expressivity even with the orthogonality constraint. Post-merging, our method achieves the highest scores in image and identity alignment. Our method is also capable of maintaining text alignment scores comparable to other high-fidelity methods such as P+ and MoS.}
    \vspace{-1em}
  \label{tab:speed_eval}
\end{table*}

\paragraph{Baselines.} We compare our method against state-of-the-art baselines on the task of modular customization, namely: DreamBooth-LoRA~\cite{db_lora}, $\mathcal{P}+$~\cite{voynov2023p+}, Custom Diffusion~\cite{kumari2022customdiffusion}, and Mix-of-Show~\cite{gu2023mix}. Fine-tuned models are merged differently depending on the method. DreamBooth-LoRA is merged using FedAvg, Custom Diffusion is merged using their proposed optimization-based merging method, and Mix-of-Show is merged using gradient fusion as outlined in their work. Since $\mathcal{P}+$ does not perform fine-tuning on the weights of the network, merging is done simply by querying each concept's token embedding. For completeness, we also compare against Mix-of-Show merged using FedAvg, serving as an efficient alternative to the computationally demanding gradient fusion method.
\vspace{-1em}
\paragraph{Experimental setup and metrics.}First, we fine-tune each concept individually, without access to data for any other concept. Each fine-tuned model is then combined with two other concepts at random using their corresponding method for merging. Following prior work, we evaluate our method on \textit{image alignment}, which measures the similarity of image features between generated images and the input reference image by measuring their similarity in the CLIP image feature space~\cite{gal2022image}. Similarly, we evaluate our method using \textit{text alignment}, ensuring the output generations still adhere to the input text-prompts by measuring the text-image similarity also using CLIP~\cite{hessel2022clipscore}. However, to further illustrate the identity preserving capabilities of our method, we also evaluate our method using the ArcFace~\cite{Deng_2022} model. Using the ArcFace model, we measure the rate at which the target human identity is detected in a set of generated images, we refer to this metric as \textit{identity alignment}. 

\subsection{Qualitative Comparisons}
\paragraph{Merged single-concept results.} We illustrate the identity preserving effect of our method by comparing single-concept generations of different identities from the same merged model. As mentioned above, each concept is fine-tuned individually and merged together during inference. Fig~\ref{fig:identity} shows generations for three separate concept identities, each column contains images sampled from the same model. Our method achieves better identity alignment with the input images in the merged model compared to methods with comparable merging times. We also achieve similar results to Mix-of-Show (Gradient Fusion), which requires $\sim$15 minutes to merge three concepts, while our method enables near instant merging.

\vspace{-1em}
\paragraph{Merged multi-concept results.} We also show generated images containing all three identities in the merged model. Leveraging multi-concept sampling techniques from concurrent work~\cite{gu2023mix}, we show examples of multi-concept generations in Fig.~\ref{fig:multi-concept}. Once again, multi-concept models trained using our method generate images with better identity alignment than competing baselines. Due to the poor performance of DB-LoRA~\cite{db_lora} and Custom Diffusion~\cite{kumari2022customdiffusion} for single-concept generations, we omit results for these methods on multi-concept generation due to space constraints.

$\mathcal{P}+$~\cite{hertz2022prompt} suffers from low concept fidelity due to limited expressivity in their training regime. Although Mix-of-Show~\cite{gu2023mix} (FedAvg) preserves certain high-level features through the layer-wise text-embedding, it still suffers from crosstalk due to unconstrained training of weight residuals. Mix-of-Show (Gradient Fusion) shows impressive identity alignment, however, this is only enabled by a computationally demanding merging procedure. Our method achieves high identity alignment while keeping the merging process at near instant rates.

\subsection{Quantitative Results}
We present quantitative comparisons in Table.~\ref{tab:speed_eval}. Specifically, we show all three evaluation metrics applied to each method before and after merging. Our method achieves comparable results in all concept alignment metrics before merging, illustrating the expressivity of our method despite the orthogonality constraint. After merging, our method achieves the highest image and identity alignment scores across all methods, while maintaining comparable text alignment scores with other high-fidelity methods such as Mix-of-Show and $\mathcal{P}+$. This illustrates that our method is able to achieve high identity preservation without sacrificing the ability to generalize for different contexts. 

Note that although Custom Diffusion~\cite{kumari2022customdiffusion} and DB-LoRA~\cite{db_lora} achieves higher text alignment, this is at the cost of significantly lower concept alignment scores than that of competing methods.

\section{Ablations}
\paragraph{Effect of orthogonality.} In Fig.~\ref{fig:ablations}(a), we present generated images from a model created from merging two separate fine-tuned models (concepts $i$ and $j$). To illustrate the effect of orthogonality on identity preservation, we manipulate the degree of orthogonality between $B_i$ and $B_j$. On the left, we have the worst case scenario, where $B_i = B_j$. On the right, we show results where perfect orthogonality is achieved, i.e. $B_i^TB_j = 0$. In between, we construct $B_i$ and $B_j$ from a shared orthogonal matrix, but choose half of their columns to be overlapping. Results in Fig.~\ref{fig:ablations}(a) show that orthogonality contributes significantly to identity preservation even in the extreme case of merging 2 concepts.

\vspace{-1em}
\paragraph{Number of merged concepts}
Fig.~\ref{fig:ablations}(b) shows results generated from models with a range of concepts merged together. With orthogonality, our model is capable of merging a high number of concepts with minimal identity loss. In contrast, without orthogonality, concept fidelity quickly degrades, even with relatively low number of concepts being combined. Running our model without orthogonality is equivalent to Mix-of-Show~\cite{gu2023mix} merged using FedAvg~\cite{mcmahan2023communicationefficient}.


\section{Discussion}

\paragraph{Limitations.}
Despite showcasing the ability to encode several custom concepts into the same text-to-image model, generating images with complex compositions/interactions between multiple custom concepts remains challenging. As concepts, such as human identities, have the tendency to either be entangled, or even completely ignored. Existing works~\cite{gu2023mix, bar2023multidiffusion} have developed certain strategies for remedying this effect, but such methods are still prone to the aforementioned failure cases. Another limitation of orthogonal adaption is that it directly modifies the fine-tuning process. Therefore, existing fine-tuned networks (e.g. LoRAs~\cite{db_lora}) can not be adapted post-hoc to ensure orthogonality.
\begin{figure}[t!]
\vspace{-0.8em}
\centering
\includegraphics[width=\linewidth]{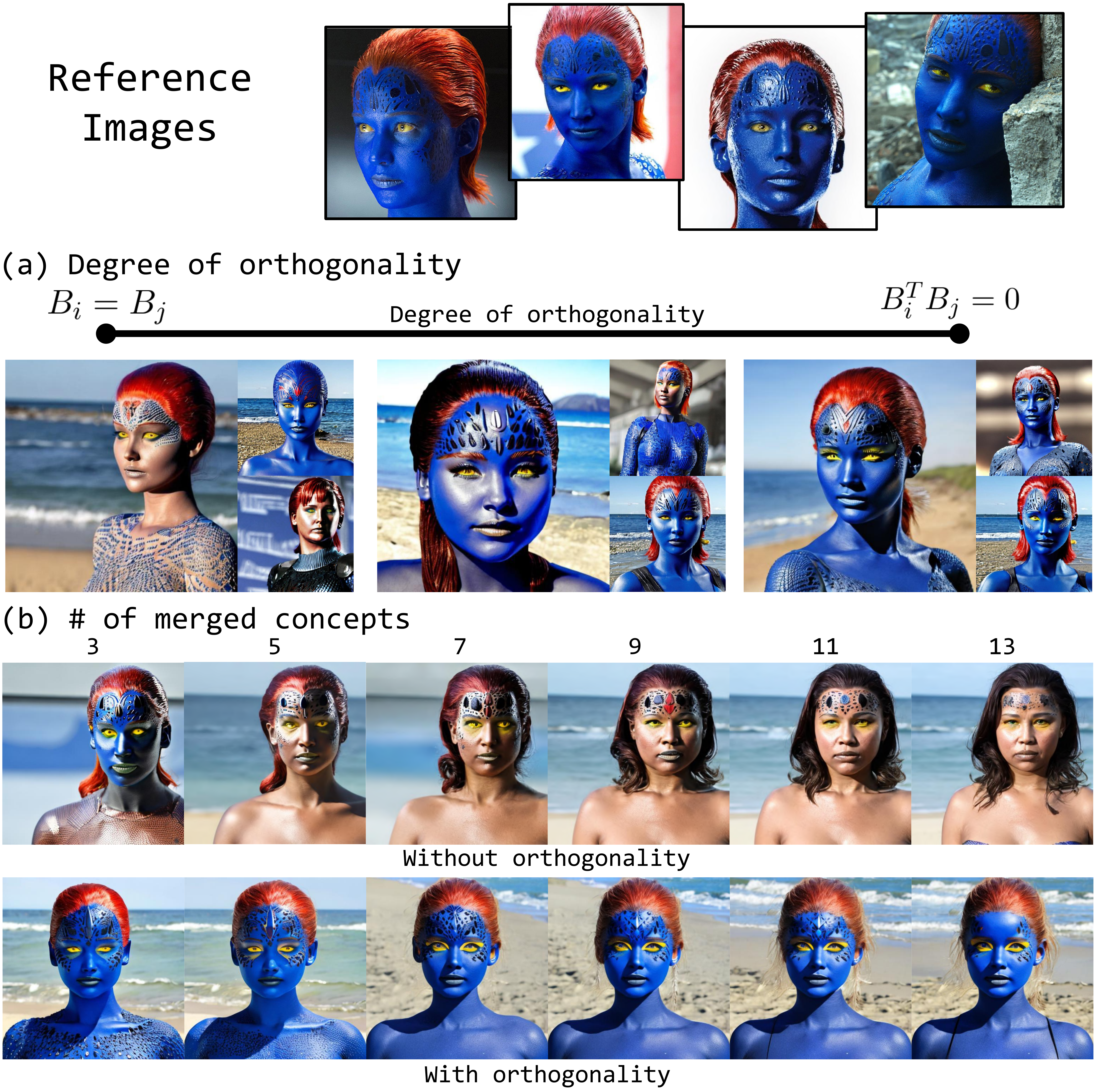}
\vspace{-1.8em}
\caption{\textbf{Ablation studies.} 
(a) Images generated from a model formed by merging two separately fine-tuned models (concepts i and j), focusing on the role of orthogonality in preserving identity. (b) Image generations from models that with a varying number of merged concepts. Without orthogonality, concept identity is lost even when merging a small number of concepts.
}
\label{fig:ablations}
\vspace{-1.2em}
\end{figure}

\vspace{-1.2em}
\paragraph{Ethics Considerations.}
Generative AI could be misused for generating edited imagery of real people with the intent of spreading disinformation. Such misuse of image synthesis techniques poses a societal threat, and we do not condone using our
work for such purposes. We also recognize a potential biases in the foundation model we built upon. 
\vspace{-1.2em}
\paragraph{Conclusions.} By disentangling customization concepts into orthogonal directions, orthogonal adaptation streamlines the process of integrating multiple independently fine-tuned concepts into a single model instantly and with trivial compute, while also ensuring preservation of each concept. Our work makes a significant step towards modular customization, where multi-concept customization can be achieved with individual, privately fine-tuned models.

\section{Acknowledgements}
We thank Youjin Song for developing the hugging-face demo, as well as Sara Fridovich-Keil and Kamyar Salahi for fruitful discussions and pointers for evaluation metrics. Po is supported by the Stanford Graduate Fellowship. This project was in part supported by Samsung and Stanford HAI.
{
    \small
    \bibliographystyle{ieeenat_fullname}
    \bibliography{references}
}

\clearpage
\setcounter{page}{1}
\maketitlesupplementary

\section{Gaussian random orthogonal matrices}
\begin{theorem}
    Let $\vv\in \Re^d$ and $\uu\in \Re^d$ be two random vectors.
    Let $\vv_i\sim \nN(0, \sigma^2 I)$ and $\uu_i\sim \nN(0, \sigma^2 I)$ for all $i\in [1, d]$ independently, then $\Ee\left[ \vv^T\uu\right] = 0$.
\end{theorem}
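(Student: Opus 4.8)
The plan is to reduce the statement to a coordinate-wise computation and then invoke linearity of expectation together with independence. First I would write the inner product explicitly as a sum over the $d$ coordinates,
\begin{equation}
    \vv^T\uu = \sum_{i=1}^d v_i u_i,
\end{equation}
so that the quantity of interest becomes $\Ee\left[\sum_{i=1}^d v_i u_i\right]$. Since the sum is finite, linearity of expectation lets me pass the expectation inside and reduce the problem to evaluating $\Ee[v_i u_i]$ for each fixed coordinate $i$.

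Next I would use the independence hypothesis. The entries $v_i$ and $u_i$ are drawn independently (indeed, all $2d$ scalar entries are mutually independent by assumption), so the expectation of the product factors as $\Ee[v_i u_i] = \Ee[v_i]\,\Ee[u_i]$. I would then substitute the fact that each entry is zero-mean Gaussian, giving $\Ee[v_i] = \Ee[u_i] = 0$, so that every term $\Ee[v_i u_i]$ vanishes. Summing the $d$ zero terms yields $\Ee[\vv^T\uu] = 0$, as claimed.

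There is no genuinely hard step here; the statement follows directly from the definition of the dot product and the two structural properties of the coordinates (mutual independence and zero mean). If anything, the only point demanding care is making precise \emph{which} independence is being used: it is the cross-independence between the entries of $\uu$ and those of $\vv$ (not merely independence among the entries of a single vector) that justifies the factorization $\Ee[v_i u_i] = \Ee[v_i]\Ee[u_i]$. I would state that assumption explicitly at the outset so the one-line factorization is fully justified. Note also that the argument never uses the precise value of $\sigma$, nor even Gaussianity beyond the zero-mean property, so the conclusion holds for any zero-mean, mutually independent entries; I would remark on this to clarify why the construction yields matrices that are ``orthogonal in expectation.''
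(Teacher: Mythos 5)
Your proof is correct and follows exactly the same route as the paper's: expand $\vv^T\uu$ coordinate-wise, apply linearity of expectation, factor $\Ee[\vv_i\uu_i]=\Ee[\vv_i]\Ee[\uu_i]$ by independence, and conclude from the zero means. Your added remarks (that cross-independence between the two vectors is the key hypothesis, and that Gaussianity beyond zero mean is never used) are accurate clarifications but do not change the argument.
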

\begin{proof}
    \begin{align*}
        \Ee\left[\vv^T\uu\right] 
        &= \Ee\left[\sum_{i=1}^d \vv_i\uu_i \right]  & \\
        &= \sum_{i=1}^d \Ee\left[ \vv_i\uu_i \right]  &\text{(Linearity of expectation)}\\
        &= \sum_{i=1}^d \Ee[\vv_i]\Ee[\uu_i] 
        &\text{(Independent)}\\
        &= \sum_{i=1}^d 0\cdot 0 = 0.
    \end{align*}
\end{proof}

\begin{corollary}
    Let $\AA\in \Re^{n\times m}$ and $\BB\in \Re^{n\times m}$.
    All entries of these matrices are independently sampled from $\nN(0, \sigma^2I)$.
    Then $\Ee[\AA^T\BB] = \mathbf{0} \in \Re^{m\times m}$.
\end{corollary}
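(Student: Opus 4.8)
The plan is to reduce the corollary to the scalar-valued Theorem by arguing entry by entry. First I would expand the $(i,j)$ entry of the product: by the definition of matrix multiplication, $(\AA^T\BB)_{ij} = \sum_{k=1}^n \AA_{ki}\BB_{kj}$, which is precisely the inner product of the $i$-th column of $\AA$ with the $j$-th column of $\BB$. This identity is what lets the matrix statement collapse into $m^2$ copies of the vector statement already proven.

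Next I would identify these two columns with the random vectors appearing in the Theorem. Set $\vv \in \Re^n$ to be the $i$-th column of $\AA$ and $\uu \in \Re^n$ to be the $j$-th column of $\BB$. Since every entry of $\AA$ and $\BB$ is drawn independently from a zero-mean Gaussian with variance $\sigma^2$, the coordinates $\vv_k = \AA_{ki}$ and $\uu_k = \BB_{kj}$ are independent zero-mean Gaussians, so the pair $(\vv, \uu)$ satisfies the hypotheses of the Theorem with $d = n$. Applying the Theorem then yields $\Ee\left[(\AA^T\BB)_{ij}\right] = \Ee\left[\vv^T\uu\right] = 0$ for each fixed pair $(i,j)$.

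Finally, since the expectation of a random matrix is taken entrywise, having every entry of $\Ee[\AA^T\BB]$ equal to zero immediately gives $\Ee[\AA^T\BB] = \mathbf{0} \in \Re^{m\times m}$, which completes the argument.

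The reduction itself is routine; the one point that deserves care is verifying the independence hypothesis of the Theorem, particularly for the diagonal entries where $i = j$. Here there is no real obstacle: even when $i = j$, the vectors $\vv$ and $\uu$ are columns of two \emph{different} matrices $\AA$ and $\BB$ whose entries are all mutually independent, so $\vv$ and $\uu$ remain independent and the Theorem applies unchanged. Thus the only genuine content is the column-inner-product identity, after which the Theorem does all the work.
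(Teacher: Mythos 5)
Your proof is correct and follows essentially the same route as the paper's: the paper also reduces the $(i,j)$ entry of $\Ee[\AA^T\BB]$ to the column inner product $\Ee[\AA_i^T\BB_j]$ and invokes the theorem to conclude it vanishes. Your additional remark about why independence still holds when $i=j$ (the columns come from two different matrices) is a detail the paper leaves implicit, but it is the same argument.
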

\begin{proof}
    \begin{align*}
        \Ee[\AA^T\BB]_{ij} = \Ee[\AA_i^T\BB_j] = 0.
    \end{align*}
\end{proof}

\section{Implementation details}
\paragraph{Dataset.} We chose to evaluate our method on human datasets due to the robustness of face recognition algorithms for evaluation purposes. While prior works ~\cite{kumari2022customdiffusion, ruiz2023dreambooth, gu2023mix, han2023svdiff} have employed CLIP-based metrics as a method of evaluating identity alignment, we found that CLIP features are often poor at identifying fine details in a custom concept. In Fig.~\ref{fig:crosstalk_supp}, we illustrate that our method works for non-human objects too.
\begin{figure}[t!]
\centering
\includegraphics[width=\linewidth]{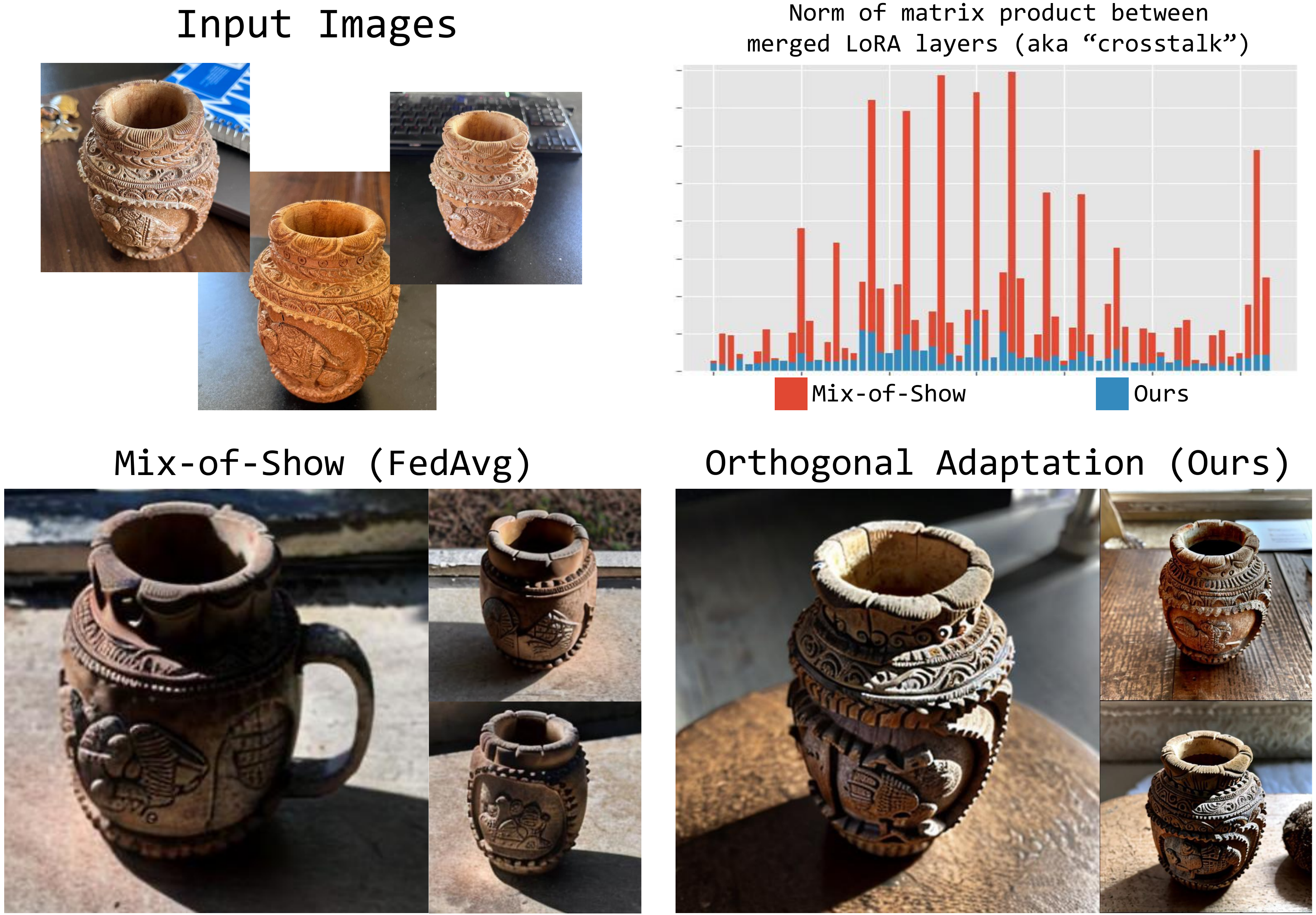}
\caption{\textbf{Identity loss due to crosstalk.} We illustrate the effects of crosstalk by examining the effects of interfering signals between independently trained LoRAs. Measuring crosstalk through the norm of the product between two LoRA weights, our method results in lower crosstalk between independently trained LoRAs. Combined via the same method, our training regime leads to less crosstalk and therefore better identity preservation after merging.}
\label{fig:crosstalk_supp}
\vspace{-1.5em}
\end{figure}
\paragraph{Evaluation details.} We introduce the \textit{identity alignment} metric for measuring the ability of our method (and competing baselines) in capturing the target human identity in resulting generations. We use the ArcFace~\cite{Serengil2020LightFaceAH} facial recognition algorithm and consider a detection to be recorded when the ArcFace distance between two detected faces falls below 0.680~\cite{Serengil2020LightFaceAH}. We choose to use detection probability as a metric rather than the raw distance metric as we found the distance metric to favor over-fitted models. Past the detection threshold, the distance metric directly measures the similarity between two faces, which is not ideal for use-cases such as re-stylization and accessorization.
\paragraph{Orthogonal adaptation details.} In our method, we enforce the orthogonality constraint through the LoRA down projection matrix $B$. This formulation ensures orthogonality in the row-space of the resulting LoRA matrices. In theory, we can also achieve orthogonality between trained weight residuals in the column-space, in which case the orthogonality constraint would have to be enforced on the up-projection matrix $A$ instead. We choose to enforce orthogonality in the row-space since the weight residuals interact with the layer inputs through their rows. The concept preservation formulation presented in Sec.~\ref{sec:setup} is also reliant on row-space orthogonality.
\begin{figure*}[t!]
\centering
\includegraphics[width=0.8\linewidth]{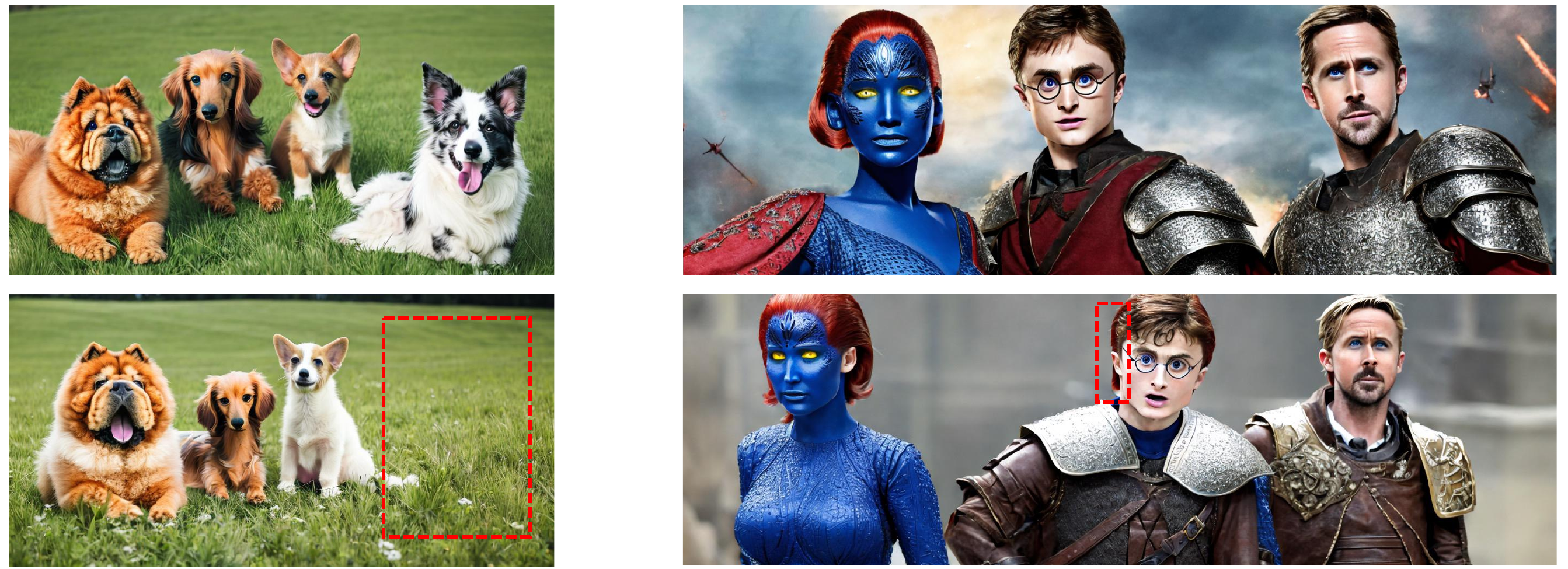}
\caption{\textbf{Multi-concept failure cases.} Multi-concept generation remains as an open challenge. Despite employing techniques such as regionally controllable sampling from prior work~\cite{gu2023mix}, this method can still suffer from failure cases such as: (left) ignoring concepts, and (right) leakage of concept attributes to neighboring identities.}
\label{fig:failure}
\vspace{-1.5em}
\end{figure*}
In our results, we chose to use the random orthogonal basis method for enforcing orthogonality in all our results. Although the Gaussian random method results in orthogonality on expectation, the orthogonal basis method led to lower crosstalk emperically. The orthogonal basis method requires a shared orthogonal matrix to sample from. In practice, using Stable Diffusion v1.5, there are only four unique input dimensions for all layers in the diffusion model (320, 640, 768, 1280). Therefore, we only have to store four unique square matrices from which all sampled $B_i$'s can then be sampled from. These four orthogonal matrices can be downloaded along with the base model, but they can also be generated on the fly with a fix seed to ensure they are shared among all users.

\paragraph{FedAvg merging coefficient.} Existing work considers FedAvg merging with affine coefficients. However, with a larger number of concepts, affinely combining each LoRA will lead to dilution of signal from individual LoRAs. It is also a common practice to scale individual LoRA weights post-hoc~\cite{db_lora} for direct control over the signal strength from the fine-tuning process. We combine this scaling factor along with the FedAvg merging factor to obtain a single scale factor $\lambda_i$ as shown in Eq.~\ref{eq:fedavg}. We consider merging coefficients as a hyper-parameter that can be tuned based on user preferences.

\section{Additional results}
\paragraph{Illustration of crosstalk.} Fig.~\ref{fig:crosstalk_supp} illustrates the importance of minimizing crosstalk for identity preservation when merging LoRA weights into a single model. We measure crosstalk formally using the norm of the matrix product between individually trained LoRA weight residuals. Upper right of Fig.~\ref{fig:crosstalk_supp} shwos a direct comparison of the layer-wise normalized matrix product norms between two LoRAs trained with and without orthogonality constraints. Our method leads to a much lower levels of crosstalk, which translates to better identity preservation as observed from the resulting generations.
\vspace{-1.2em}
\paragraph{Extended baseline comparisons.} In Fig.~\ref{fig:identity_supp} We show an extended version of Fig.~\ref{fig:identity} with generated images of each identity for each method before they are merged. These results aim to show that our method is capable of retaining identity alignment with the target concept before and after merging, while achieving merging of individual LoRAs instantly without any further fine-tuning or optimization stages.

\paragraph{Over-fitting.} Since we are fine-tuning our network over a small custom dataset and we initialize our custom tokens with a user-defined class label, it may be susceptible to over-fitting. Prior works such as DreamBooth~\cite{ruiz2023dreambooth} and Custom Diffusion~\cite{kumari2022customdiffusion} alleviate this effect by adding a class preservation loss that ensures generating images from the class token still produces diverse results. In our method, we do not employ an explicit loss to prevent over-fitting, however, we found that our fine-tuned models still preserve the ability to generate diverse images for the trained class label as shown in Fig.~\ref{fig:class}

\section{Limitations and future work}
Our method takes an important step towards achieving modular customization. However, a few important limitations should also be addressed in future work.

Generating multiple custom concepts within the same image remains challenging. Simply prompting a merged model with multiple custom tokens usually leads to incoherent hybrids of both objects. Prior works~\cite{gu2023mix} have explored spatial guidance for better disentangling concepts in a single generation, and we have also employed similar techniques to generate our results. However, these methods still lead to failure cases as illustrated in Fig.~\ref{fig:failure}. Concepts are often ignored, or attributes can leak to neighboring concepts. Future work should aim to address these struggles to further enable multi-concept generations.

Storing individual LoRAs, even those trained with our method can also be expensive. Although LoRAs are already compressive due to their low-ranked nature, storing a large bank of concepts for modualr customization can still be expensive. Works such as SVDiff~\cite{han2023svdiff} takes steps towards further compressing LoRAs while maintaining fidelity of generated images. However, our method does not naturally fit in with the SVDiff method, implying the need for a tailored compressing methodology.
\newpage
\begin{figure*}[t!]
\centering
\includegraphics[width=0.83\linewidth]{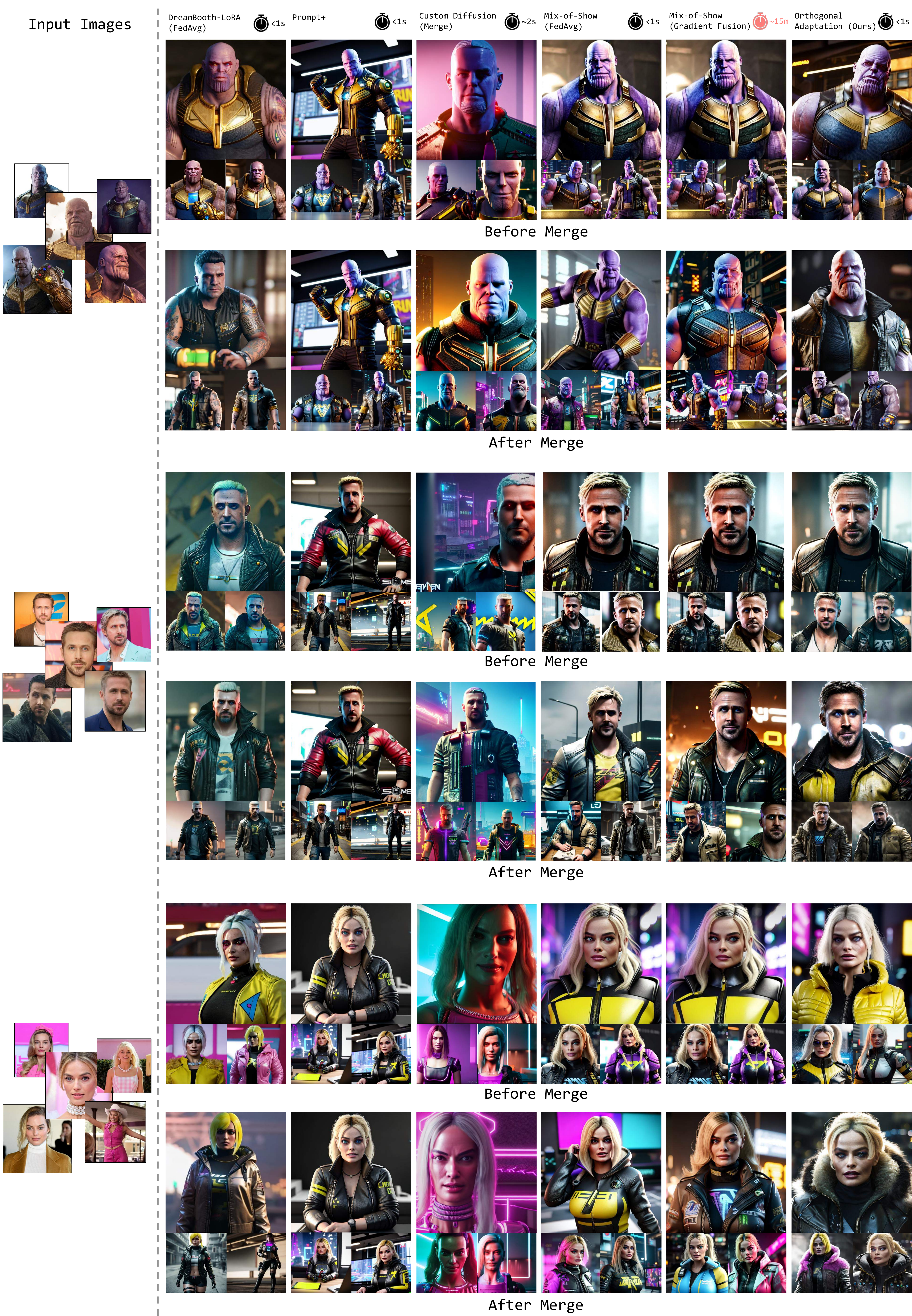}
\caption{\textbf{Extended multi-concept results.} We show results for each method before and after merging the individually trained models into a single, merged model. Our method is able to capture the target identity with high fidelity before and after the merging process, while keeping the merging process instantaneous.}
\label{fig:identity_supp}
\vspace{-1.5em}
\end{figure*}

\newpage
\begin{figure*}[t!]
\centering
\includegraphics[width=0.9\linewidth]{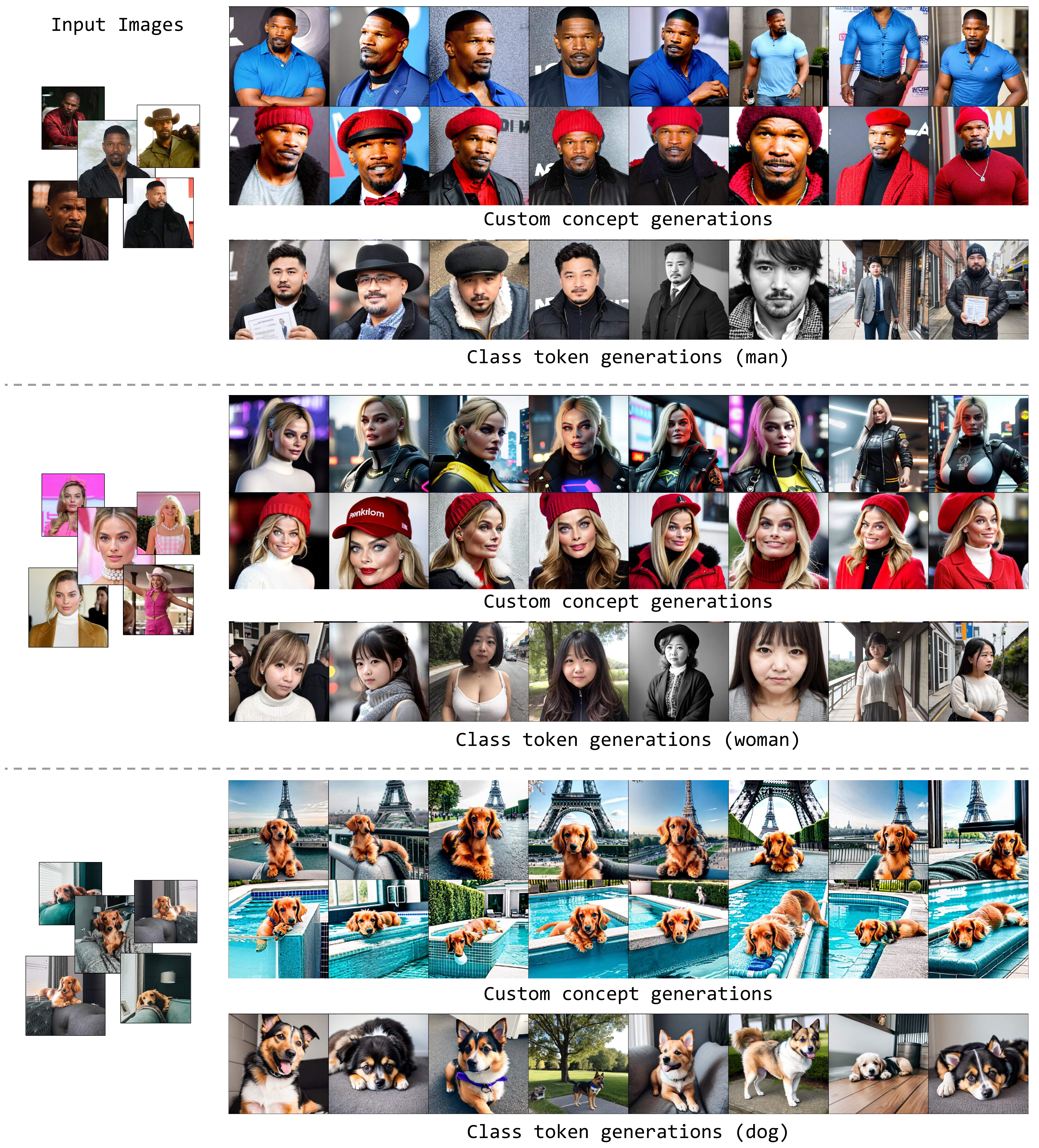}
\caption{\textbf{Preservation of class label.} Although our method does not enforce an explicit class preservation loss similar to prior works~\cite{ruiz2023dreambooth,kumari2022customdiffusion}, our method is able to preserve diversity when generating images of the class label used for initialization of the custom concept token. We show this across three different classes, namely: \textit{man}, \textit{woman}, and \textit{dog}.}
\label{fig:class}
\vspace{-1.5em}
\end{figure*}

\end{document}